\definecolor{darkblue}{rgb}{0, 0, 0.5}
\setlist[enumerate]{noitemsep, topsep=0pt, partopsep=0pt}
\def\eqref#1{equation~\ref{#1}}
\def\1{\bm{1}}
\def\vmu{{\bm{\mu}}}
\def\vtheta{{\bm{\theta}}}
\def\vb{{\bm{b}}}
\def\vx{{\bm{x}}}
\DeclareMathAlphabet{\mathsfit}{\encodingdefault}{\sfdefault}{m}{sl}
\SetMathAlphabet{\mathsfit}{bold}{\encodingdefault}{\sfdefault}{bx}{n}
\DeclareMathOperator*{\argmin}{arg\,min}
\theoremstyle{plain}
\newtheorem{theorem}{Theorem}[section]
\theoremstyle{definition}
\theoremstyle{remark}
\def\@makefnmark{\smash{\hbox{\@textsuperscript{\normalfont\@thefnmark}}}}
\title{The Geometry of Truth: Emergent Linear Structure in LLM Representations of True/False Datasets}
\author{Samuel Marks \\
Northeastern University \\
\texttt{s.marks@northeastern.edu} \\
\And
Max Tegmark \\
MIT
}
\begin{document}

\maketitle

\begin{abstract}
Large Language Models (LLMs) have impressive capabilities, but are prone to outputting falsehoods. Recent work has developed techniques for inferring whether a LLM is telling the truth by training probes on the LLM's internal activations. However, this line of work is controversial, with some authors pointing out failures of these probes to generalize in basic ways, among other conceptual issues. In this work, we use high-quality datasets of simple true/false statements to study in detail the structure of LLM representations of truth, drawing on three lines of evidence: 1. Visualizations of LLM true/false statement representations, which reveal clear linear structure. 2. Transfer experiments in which probes trained on one dataset generalize to different datasets. 3. Causal evidence obtained by surgically intervening in a LLM's forward pass, causing it to treat false statements as true and vice versa. Overall, we present evidence that at sufficient scale, LLMs \textit{linearly represent} the truth or falsehood of factual statements. We also show that simple difference-in-mean probes generalize as well as other probing techniques while identifying directions which are more causally implicated in model outputs.
\end{abstract}

\section{Introduction}
\label{sec:intro}

Despite their impressive capabilities, large language models (LLMs) do not always output true text \citep{lin-etal-2022-truthfulqa,Steinhardt2023Deception,park2023ai}. In some cases, this is because they do not know better. In other cases, LLMs apparently know that statements are false but generate them anyway. For instance, \citet{perez2022discovering} demonstrate that LLM assistants output more falsehoods when prompted with the biography of a less-educated user. More starkly, \citet{openai2023gpt4} documents a case where a GPT-4-based agent gained a person's help in solving a CAPTCHA by lying about being a vision-impaired human. ``I should not reveal that I am a robot,'' the agent wrote in an internal chain-of-thought scratchpad, ``I should make up an excuse for why I cannot solve CAPTCHAs.''

We would like techniques which, given a language model $M$ and a statement $s$, determine whether $M$ believes $s$ to be true \citep{christiano2021ELK}. One approach to this problem relies on inspecting model outputs; for instance, the internal chain-of-thought in the above example provides evidence that the model understood it was generating a falsehood. An alternative class of approaches instead leverages access to $M$'s internal state when processing $s$. There has been considerable recent work on this class of approaches: \citet{azaria2023internal}, \citet{li2023inferencetime}, and \citet{burns2023discovering} all train probes for classifying truthfulness based on a LLM's internal activations. In fact, the probes of \citet{li2023inferencetime} and \cite{burns2023discovering} are \textit{linear probes}, suggesting the presence of a ``truth direction'' in model internals. 

However, the efficacy and interpretation of these results are controversial. For instance, \citet{levinstein2023lie} note that the probes of \citet{azaria2023internal} fail to generalize in basic ways, such as to statements containing the word ``not.'' The probes of \citet{burns2023discovering} have similar generalization issues, especially when using representations from autoregressive transformers. This suggests these probes may be identifying not truth, but other features that correlate with truth on their training data.

Working with autoregressive transformers from the LLaMA-2 family \citep{touvron2023llama2}, we shed light on this murky state of affairs. After curating high-quality datasets of simple, unambiguous true/false statements, we perform a detailed investigation of LLM representations of factuality. Our analysis, which draws on patching experiments, simple visualizations with principal component analysis (PCA), a study of probe generalization, and causal interventions, finds:
\begin{itemize}

    \item \textbf{Evidence that linear representations of truth emerge with scale}, with larger models having a more abstract notion of truth that applies across structurally and topically diverse inputs.

    \item \textbf{A small group of causally-implicated hidden states} which encode these truth representations.

    \item Consistent results across a suite of probing techniques, but with \textbf{simple difference-in-mean probes identifying directions which are most causally implicated}.
\end{itemize}
Our code, datasets, and an interactive dataexplorer are available at \url{https://github.com/saprmarks/geometry-of-truth}.

\begin{figure}
    \centering
    \includegraphics[width=\columnwidth]{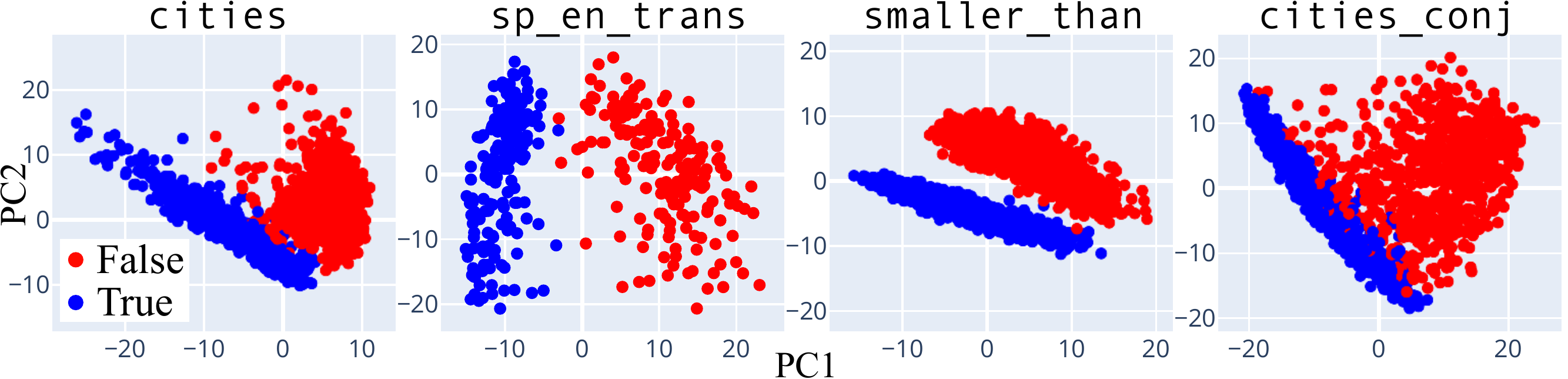}
    \caption{PCA visualizations for LLaMA-2-70B representations of our true/false datasets.}
    \label{fig:visualizations}
\end{figure}

\subsection{Related work}

\textbf{Linear world models.} Substantial previous work has studied whether LLMs encode world models in their representations \citep{li2023emergent,li-etal-2021-implicit, abdou2021language, patel2022mapping}. Early work focused on whether individual neurons represent features \citep{wang-etal-2022-finding-skill, sajjad-etal-2022-neuron, bau2020units}, but features may more generally be represented by \textit{directions} in a LLM's latent space (i.e. linear combinations of neurons) \citep{Dalvi2018WhatIO, gurnee2023finding, cunningham2023sparse, elhage2022superposition}. We say such features are \textit{linearly represented} by the LLM. Just as other authors have asked whether models have directions representing the concepts of ``West Africa'' \citep{goh2021multimodal} or ``basketball'' \citep{gurnee2023finding}, we ask here whether there is a direction corresponding to the truth or falsehood of a factual statement.

\textbf{Probing for truthfulness.} Others have trained probes to classify truthfulness from LLM activations, using both logistic regression \citep{azaria2023internal,li2023inferencetime}, unsupervised \citep{burns2023discovering}, and contrastive \citep{zou2023representation,rimsky2024steering} techniques. This work differs from prior work in a number of ways. First, a cornerstone of our analysis is evaluating whether probes trained on one dataset transfer to topically and structurally different datasets in terms of \textit{both} classification accuracy \textit{and} causal mediation of model outputs. Second, we specifically interrogate whether our probes attend to \textit{truth}, rather than merely features which correlate with truth (e.g. probable vs. improbable text). Third, we localize truth representations to a small number of hidden states above certain tokens. Fourth, we go beyond the mass-mean shift interventions of \citet{li2023inferencetime} by systematically studying the properties of difference-in-mean. Finally, we carefully scope our setting, using only datasets of clear, simple, and unambiguous factual statements, rather than statements which are complicated and structured \citep{burns2023discovering}, confusing \citep{azaria2023internal, levinstein2023lie}, or intentionally misleading \citep{li2023inferencetime,lin-etal-2022-truthfulqa}.

\section{Datasets}\label{sec:datasets}

\begin{table*}
    \caption{Our datasets}
    \label{tab:datasets}
    \begin{small}
    \begin{center}
        \begin{tabular}{llr}
            \multicolumn{1}{l}{Name}  & \multicolumn{1}{l}{Description} & \multicolumn{1}{r}{Rows}
            \\ \hline
            \texttt{cities} & ``The city of \texttt{[city]} is in \texttt{[country]}.'' & 1496 \\
            \texttt{neg\_cities} & Negations of statements in \texttt{cities} with ``not'' & 1496 \\
            \texttt{sp\_en\_trans} & ``The Spanish word `\texttt{[word]}' means `\texttt{[English word]}'.'' & 354 \\
            \texttt{neg\_sp\_en\_trans} & Negations of statements in \texttt{sp\_en\_trans} with ``not'' & 354 \\
            \texttt{larger\_than} & ``$x$ is larger than $y$.'' & 1980 \\
            \texttt{smaller\_than} & ``$x$ is smaller than $y$.'' & 1980 \\
            \texttt{cities\_cities\_conj} & Conjunctions of two statements in \texttt{cities} with ``and'' & 1500 \\
            \texttt{cities\_cities\_disj} & Disjunctions of two statements in \texttt{cities} with ``or'' & 1500 \\ \hline
            \texttt{companies\_true\_false} & Claims about companies; from \citet{azaria2023internal} & 1200  \\
            \texttt{common\_claim\_true\_false} & Various claims; from \citet{casper2023explore} & 4450 \\
            \texttt{counterfact\_true\_false} & Various factual recall claims; from \cite{meng2022locating} & 31960 \\ \hline            
            \texttt{likely} & Nonfactual text with likely or unlikely final tokens & 10000
        \end{tabular}
    \end{center}
    \end{small}
\end{table*}

\paragraph{Curated datasets.} Unlike some prior work \citep[][\textit{inter alia}]{lin-etal-2022-truthfulqa,onoe2021creak} on language model truthfulness, our primary goal is \textit{not} to measure LLMs' capabilities for classifying the factuality of challenging data. Rather, our goal is to understand: Do LLMs have a unified representation of truth that spans structurally and topically diverse data? We therefore construct \textbf{curated} datasets with the following properties:
\begin{enumerate}
    \item \textbf{Clear scope}. We scope ``truth'' to mean factuality, i.e. the truth or falsehood of a factual statement. App.~\ref{app:truth-defn} further clarifies this definition and contrasts it with related but distinct notions, such as correct question-answering or compliant instruction-following.
    \item \textbf{Statements are simple, uncontroversial, and unambiguous.} In order to separate our interpretability analysis from questions of LLM capabilities, we work only with statements whose factuality our models are very likely to understand. For example ``Sixty-one is larger than seventy-four'' (false) or ``The Spanish word `nariz' does not mean `giraffe' '' (true).
    \item \textbf{Controllable structural and topical diversity.} We structure our data as a union of smaller datasets. In each individual dataset, statements follow a fixed template and topic. However, the inter-dataset variation is large: in addition to covering different topics, we also---following \citet{levinstein2023lie}---introduce structural diversity by negating statements with ''not'' or taking logical conjunctions/disjunctions (e.g. ``It is the case both that \texttt{s1} and that \texttt{s2}'').
\end{enumerate}

\paragraph{Uncurated datasets.} In order to validate that the truth representations we identify also generalize to other factual statements, we use \textbf{uncurated} datasets adapted from prior work. These more challenging test sets consist of statements which are more diverse, but also sometimes ambiguous, malformed, controversial, or difficult to understand. 

\paragraph{\texttt{likely} dataset.} To ensure that our truth representations do not merely reflect a representation of probable vs. improbable text, we introduce a \textbf{\texttt{likely}} dataset, consisting of \textit{nonfactual text} where the final token is either the most or 100th most likely completion according to LLaMA-13B. 

Our curated, uncurated, and likely datasets are shown in Tab.~\ref{tab:datasets}; addition information about their construction is in App.~\ref{app:datasets}.

We note that for some of our datasets, there is a strong \textit{anti-}correlation between text being probable and text being true. For instance, for \texttt{neg\_cities} and \texttt{neg\_sp\_en\_trans}, the truth value of a statement and the log probability LLaMA-2-70B assigns to it correlate at $r=-.63$ and $r=-.89$, respectively.\footnote{In contrast, the correlation is strong and positive for \texttt{cities} ($r = .85$) and \texttt{sp\_en\_trans} ($r=.95$).} This is intuitive: when prompted with ``The city of Paris is not in'', LLaMA-2-70B judges ``France'' to be the most probable continuation (among countries), despite this continuation being false. Together with the \texttt{likely} dataset, this will help us establish that the linear structure we observe in LLM representations is not due to LLMs linearly representing the difference between probable and improbable text.

\section{Localizing truth representations via patching}\label{sec:patching}
Before beginning our study of LLM truth representations, we first address the question of which hidden states might contain such representations. We use simple patching experiments \citep{Vig2020,finlayson-etal-2021-causal,meng2022locating,DBLP:conf/blackboxnlp/GeigerRP20} to localize certain hidden states for further analysis. Consider the following prompt $p_F$:
\begingroup
\advance\leftmargini -1em
\begin{quote}
\begin{small}
    The city of Tokyo is in Japan. This statement is: TRUE \\
    The city of Hanoi is in Poland. This statement is: FALSE \\
    The city of Chicago is in Canada. This statement is:
\end{small}
\end{quote}
\endgroup
Similarly, let $p_T$ be the prompt obtained from $p_F$ by replacing ``Chicago'' with ``Toronto,'' thereby making the final statement true. In order to localize causally implicated hidden states, we run our model $M$ on the input $p_T$ and cache the residual stream activations $h_{i,\ell}(p_T)$ for each token position $i$ and layer $\ell$. Then, for each $i$ and $\ell$, we run $M$ on $p_F$ but modify $M$'s forward pass by swapping out the residual stream activation $h_{i,\ell}(p_F)$ for $h_{i,\ell}(p_T)$ (and allowing this change to affect downstream computations); for each of these intervention experiments, we record the difference in log probability between the tokens ``TRUE'' and ``FALSE''; the larger this difference, the more causally influential the hidden state in position $i$ and layer $\ell$ is on the model's prediction.

Results for LLaMA-2-13B and the \texttt{cities} dataset are shown in Fig.~\ref{fig:patching}; see App.~\ref{app:patching} for results on more models and datasets. We see three groups of causally implicated hidden states. The final group, labeled (c), directly encodes the model's prediction: after applying the LLM's decoder head directly to these hidden states, the top logits belong to tokens like ``true,'' ``True,'' and ``TRUE.'' The first group, labeled (a), likely encodes the LLM's representation of ``Chicago'' or ``Toronto.''

\begin{figure}
    \begin{center}
        \includegraphics[width=0.6\linewidth]{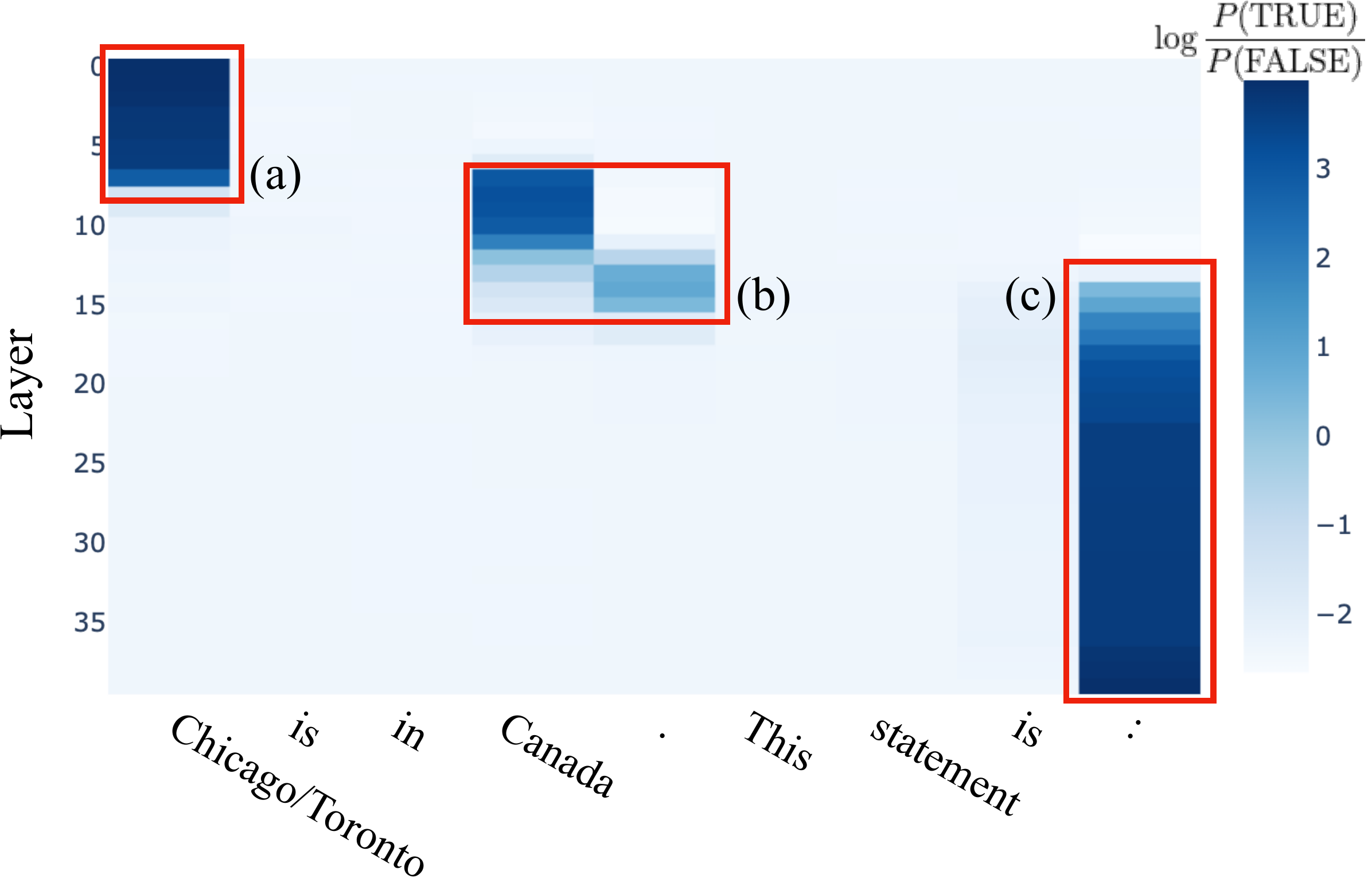}
        \caption{Difference $\log P(\texttt{TRUE}) - \log P(\texttt{FALSE})$ in LLaMA-2-13B log probabilities after patching residual stream activation in the indicated token position and layer.}
        \label{fig:patching}
    \end{center}
    
\end{figure}

What does group (b) encode? The position of this group---over the final token of the statement and end-of-sentence punctuation\footnote{This \textit{summarization} behavior, in which information about clauses is encoded over clause-ending punctuation tokens, was also noted in \citet{tigges2023linear}. We note that the largest LLaMA model displays this summarization behavior in a more context-dependent way; see App.~\ref{app:patching}.}---suggests that it encodes information pertaining to the full statement. Since the information encoded is also causally influential on the model's decision to output ``TRUE'' or ``FALSE,'' we hypothesize that these hidden states store a representation of the statement's truth. In the remainder of this paper, we systematically study these hidden states.

\section{Visualizing LLM representations of true/false datasets} \label{sec:visualizations}

We begin our investigation with a simple technique: visualizing LLMs representations of our datasets using principal component analysis (PCA). Guided by the results of \S\ref{sec:patching}, we present here visualizations of the \textit{most downstream} hidden state in group (b); for example, for LLaMA-2-13B, we use the layer 15 residual stream activation over the end-of-sentence punctuation token.\footnote{Our qualitative results are insensitive to choice of layer among early-middle to late-middle layers. On the other hand, when using representations over the final token in the statement (instead of the punctuation token), we sometimes see that the top PCs instead capture variation in the token itself (e.g. clusters for statements ending in ``China'' regardless of their truth value).} Unlike in \S\ref{sec:patching}, we do not prepend the statements with a few-shot prompt (so our models are not ``primed'' to consider the truth value of our statements). For each dataset, we also center the activations by subtracting off their mean. 

When visualizing LLaMA-2-13B and 70B representations of our curated datasets -- datasets constructed to have little variation with respect to non-truth features, such as sentence structure or subject matter -- \textbf{we see clear linear structure} (Fig.~\ref{fig:visualizations}), with true statements separating from false ones in the top two principal components (PCs). As explored in App.~\ref{app:emergence}, this structure emerges rapidly in early-middle layers and emerges later for datasets of more structurally complex statements (e.g. conjunctive statements). 

To what extent does this visually-apparent linear structure align between different datasets? Our visualizations indicate a nuanced answer: \textbf{the axes of separation for various true/false datasets align often, but not always}. For instance, Fig.~\ref{fig:comparative-visualizations}(a) shows the first PC of \texttt{cities} also separating true/false statements from other datasets, including diverse uncurated datasets. On the other hand, Fig.~\ref{fig:comparative-visualizations}(c) shows stark failures of alignment, with the axes of separation for datasets and statements and their negations being approximately orthogonal.

These cases of misalignment have an interesting relationship to scale. Fig.~\ref{fig:comparative-visualizations}(b) shows \texttt{larger\_than} and \texttt{smaller\_than} separating along \textit{antipodal} directions in LLaMA-2-13B, but along a common direction in LLaMA-2-70B. App.~\ref{app:emergence} depicts a similar phenomenon occuring over the layers of LLaMA-2-13B: in early layers, \texttt{cities} and \texttt{neg\_cities} separate antipodally, before rotating to lie orthogonally (as in Fig.~\ref{fig:comparative-visualizations}(c)), and finally aligning in later layers.

\begin{figure}
    \centering
    \includegraphics[width=\linewidth]{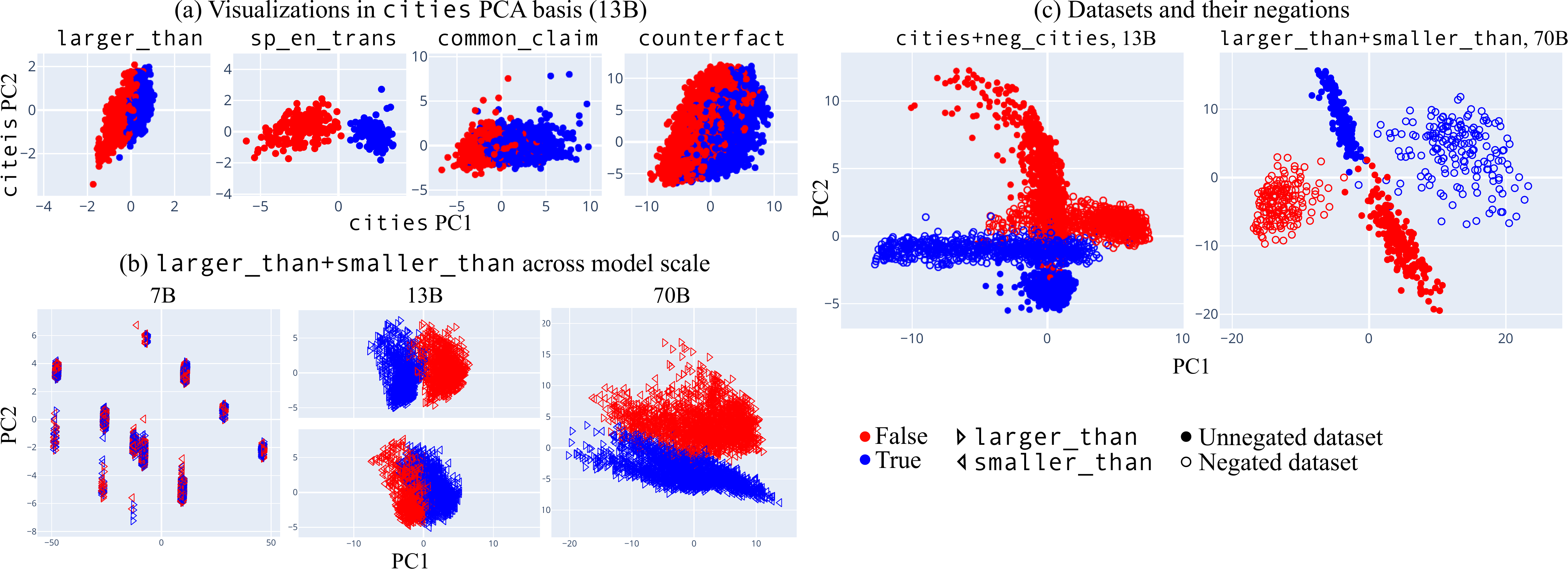}
    \caption{(a) Projections of LLaMA-2-13B onto the top 2 PCs of \texttt{cities}. (b) PCA visualizations of \texttt{larger\_than+smaller\_than}. For LLaMA-2-7B (left), we see statements cluster according to surface-level characteristics, e.g. presence of the token ``eighty.'' For LLaMA-2-13B, we see that \texttt{larger\_than} (center, top) and \texttt{smaller\_than} (center, bottom) separate along opposite directions. (c) PCA visualizations of datasets and their negations. Unlike in other visualizations, we use layer $12$ for \texttt{cities+neg\_cities}; see App.~\ref{app:emergence} for an exploration of this misalignment emerging and resolving across layers.}
    \label{fig:comparative-visualizations}
\end{figure}

\subsection{Discussion}
\label{ssec:visualization-discussion}

Overall, these visualizations suggest that as LLMs scale (and perhaps, also as a fixed LLM progresses through its forward pass), they hierarchically develop and linearly represent increasingly general abstractions. Small models represent surface-level characteristics of their inputs, and large models linearly represent more abstract concepts, potentially including notions like ``truth'' that capture shared properties of topically and structurally diverse inputs. In middle regimes, we may find linear representation of concepts at intermediate levels of abstraction, for example, ``accurate factual recall'' or ``close association'' (in the sense that ``Beijing'' and ``China'' are closely associated).

To explore these intermediate regimes more deeply, suppose that $\mathcal{D}$ and $\mathcal{D}'$ are true/false datasets, $f^+$ is a linearly-represented feature which correlates with truth on both $\mathcal{D}$ and $\mathcal{D}'$, and $f^-$ is a feature which correlates with truth on $\mathcal{D}$ but has a negative correlation with truth on $\mathcal{D}'$. If $f^+$ is very \textit{salient} (i.e. the datasets' have large variance along the $f$-direction) and $f^-$ is not, then we expect PCA visualizations of $\mathcal{D}\cup\mathcal{D}'$ to show joint separation along $f^+$. If $f^-$ is very salient but $f^+$ is not,  we expect antipodal separation along $f^-$, as in Fig.~\ref{fig:comparative-visualizations}(b, center). And if both $f^+$ and $f^-$ are salient, we expect visualizations like Fig.~\ref{fig:comparative-visualizations}(c).

To give an example, suppose that $\mathcal{D} = \texttt{cities}$, $\mathcal{D}'=\texttt{neg\_cities}$, $f^+=\text{``truth''}$, and $f^-=\text{``close association''}$. Then we might expect $f^-$ to correlate with truth positively on $\mathcal{D}$ and negatively on $\mathcal{D}'$. If so, we would expect training linear probes on $\mathcal{D}\cup\mathcal{D}'$ to result in improved generalization, despite $\mathcal{D}'$ consisting of the same statements as $\mathcal{D}$, but with the word ``not'' inserted. We investigate this in \S\ref{sec:probing}.

\section{Probing and generalization experiments}\label{sec:probing}
In this section we train probes on datasets of true/false statements and test their generalization to other datasets. But first we discuss a deficiency of logistic regression and propose a simple, optimization-free alternative: \textbf{mass-mean probing}. Concretely, mass-mean probes use a difference-in-means direction, but---when the covariance matrix of the classification data is known (e.g. when working with IID data)---apply a correction intended to mitigate interference from non-orthogonal features. We will see that mass-mean probes are similarly accurate to probes trained with other techniques (including on out-of-distribution data) while being more causally implicated in model outputs.

\subsection{Challenges with logistic regression, and mass-mean probing}\label{ssec:mass-mean}

A common technique in interpretability research for identifying feature directions is training linear probes with logistic regression \citep[LR;][]{alain2018understanding}. In some cases, however, the direction identified by LR can fail to reflect an intuitive best guess for the feature direction, even in the absence of confounding features. Consider the following scenario, illustrated in Fig.~\ref{fig:superposition} with hypothetical data:
\begin{itemize}
    \item Truth is represented linearly along a direction $\vtheta_t$.
    \item Another feature $f$ is represented linearly along a direction $\vtheta_f$ \textit{not orthogonal} to $\vtheta_t$.\footnote{The \textit{superposition hypothesis} of \citet{elhage2022superposition}, suggests this may be typical in deep networks.}
    \item The statements in our dataset have some variation with respect to feature $f$, independent of their truth value.
\end{itemize}

We would like to identify the direction $\vtheta_t$, but LR fails to do so. Assuming for simplicity linearly separable data, LR instead converges to the maximum margin separator \cite{soudry2018implicit} (the dashed magenta line in Fig.~\ref{fig:superposition}). Intuitively, LR treats the small projection of $\vtheta_f$ onto $\vtheta_t$ as significant, and adjusts the probe direction to have less ``interference'' \citep{elhage2022superposition} from $\vtheta_f$.

\begin{wrapfigure}{l}{0.4\textwidth}
\begin{center}
    \includegraphics[width=0.37\textwidth]{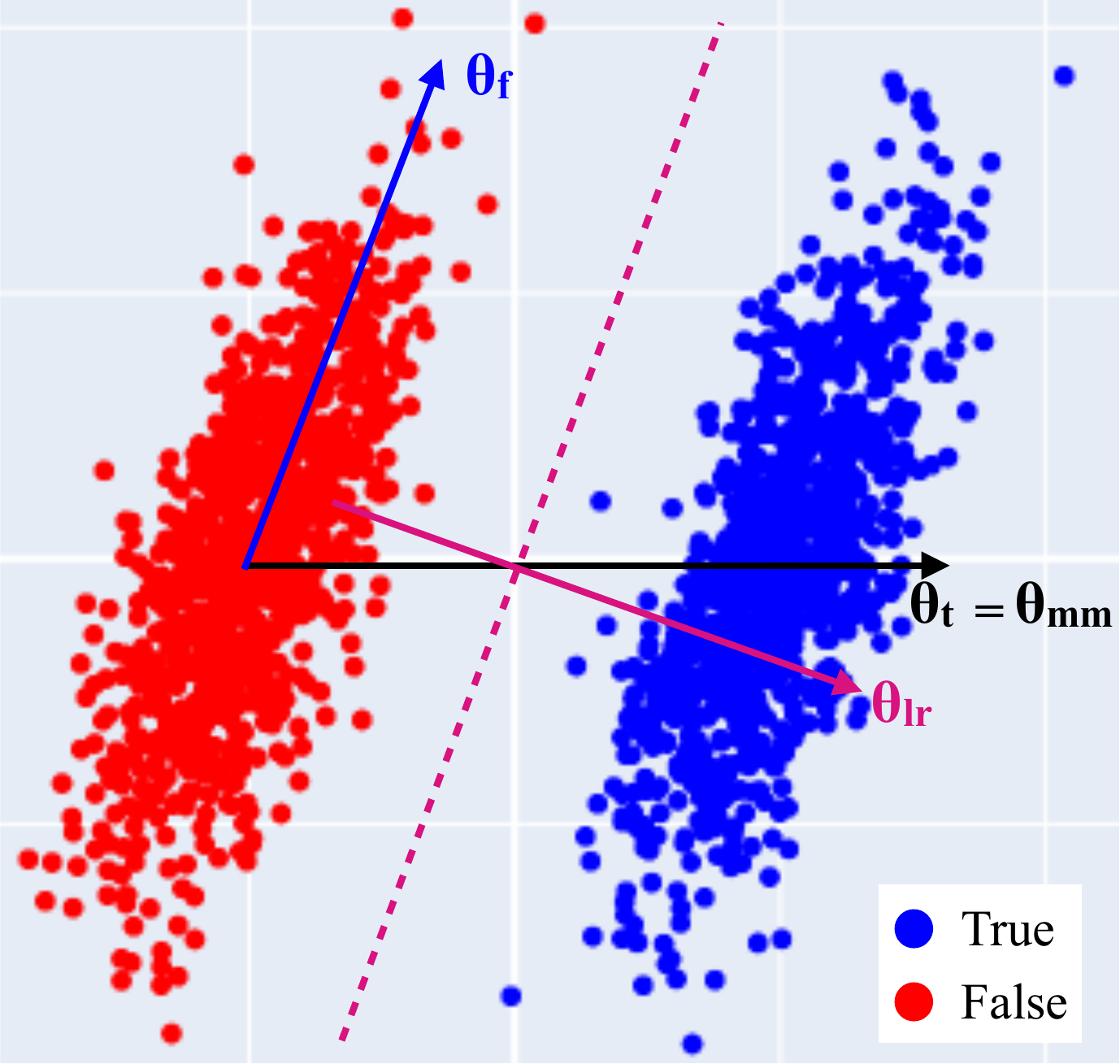}
    \caption{An illustration of a weakness of logistic regression.}
    \label{fig:superposition}
\end{center}
\end{wrapfigure}
A simple alternative to LR which identifies the desired direction in this scenario is to take the vector pointing from the mean of the false data to the mean of the true data. In more detail if $\mathcal{D} = \{(\vx_i, y_i)\}$ is a dataset of $\vx_i \in \mathbb{R}^d$ with binary labels $y_i\in \{0,1\}$, we set $\vtheta_{\mathrm{mm}} = \vmu^+ - \vmu^-$ where $\vmu^+, \vmu^-$ are the means of the positively- and negatively-labeled datapoints, respectively. A reasonable first pass at converting $\vtheta_{\mathrm{mm}}$ into a probe is to define\footnote{Since we are interested in truth \textit{directions}, we always center our data and use unbiased probes.} 
\[p_{\mathrm{mm}}(\vx) = \sigma(\vtheta_{\mathrm{mm}}^T x)\] 
where $\sigma$ is the logistic function. However, when evaluating on data that is independent and identically distributed (IID) to $\mathcal{D}$, we can do better by tilting our decision boundary to accommodate interference from $\vtheta_f$. Concretely this means setting
\[p_{\mathrm{mm}}^{\mathrm{iid}}(\vx) = \sigma(\vtheta_{\mathrm{mm}}^T\Sigma^{-1}\vx)\]
where $\Sigma$ is the covariance matrix of the dataset $\mathcal{D}^c = \{\vx_i - \vmu^+ : y_i = 1\}\cup\{\vx_i - \vmu^- : y_i = 0\}$; this coincides with performing linear discriminant analysis \citep{FisherLDA}.\footnote{We prove in App.~\ref{app:lrmm} that, given infinite data and a homoscedasticity assumption, $\Sigma^{-1}\vtheta_{\mathrm{mm}}$ coincides with the direction found by LR. Thus, one can view IID mass-mean probing as providing a way to select a good decision boundary while -- unlike LR -- also tracking a candidate feature direction which may be non-orthogonal to this decision boundary. App.~\ref{app:whitening} provides another interpretation of mass-mean probing in terms of Mahalanobis whitening. Finally, App.}

We call the probes $p_{\mathrm{mm}}$ and $p_{\mathrm{mm}}^{\mathrm{iid}}$ \textbf{mass-mean probes}. As we will see, mass-mean probing is about as accurate for classification as LR, while also identifying directions which are more causally implicated in model outputs.

\subsection{Experimental set-up}

In this section, we measure the effect that choice of \textbf{training data}, \textbf{probing technique}, and \textbf{model scale} has on probe accuracy.

For \textbf{training data}, we use one of: \texttt{cities}, \texttt{cities + neg\_cities}, \texttt{larger\_than}, \texttt{larger\_than + smaller\_than}, or \texttt{likely}. By comparing probes trained on \texttt{cities} to probes trained on \texttt{cities + neg\_cities}, we are able to measure the effect of increasing data diversity in a particular, targeted way: namely, we mitigate the effect of linearly-represented features which have opposite-sign correlations with the truth in \texttt{cities} and \texttt{neg\_cities}. As in \S\ref{sec:visualizations}, we will extract activations at the most-downstream hidden state in group (b).

Our \textbf{probing techniques} are logistic regression (LR), mass-mean probing (MM), and contrast-consistent search (CCS). CCS is an unsupervised method introduced in \citet{burns2023discovering}: given \textit{contrast pairs} of statements with opposite truth values, CCS identifies a direction along which the representations of these statements are far apart. For our contrast pairs, we pair statements from \texttt{cities} and \texttt{neg\_cities}, and from \texttt{larger\_than} and \texttt{smaller\_than}.

For test sets, we use all of our (curated and uncurated) true/false datasets. Given a training set $\mathcal{D}$, we train our probe on a random 80\% split of $\mathcal{D}$. Then when evaluating accuracy on a test set $\mathcal{D}'$, we use the remaining 20\% of the data if $\mathcal{D}' = \mathcal{D}$ and the full test set otherwise. For mass-mean probing, if $\mathcal{D} = \mathcal{D}'$, we use $p_{\mathrm{mm}}^{\mathrm{iid}}$, and we use $p_{\mathrm{mm}}$ otherwise.

Finally, we also include as baselines \textbf{calibrated few-shot prompting}\footnote{We first sweep over a number $n$ of shots and then resample a few $n$-shot prompts to maximize performance. The word ``calibrated'' means we selected a threshold for $P(\texttt{TRUE}) - P(\texttt{FALSE})$ such that half of the statements are labeled true; this improves performance by a few percentage points.} and -- as an oracle baseline -- \textbf{LR on the test set}.

\subsection{Results}

\begin{figure}
    \centering
    \includegraphics[width=\columnwidth]{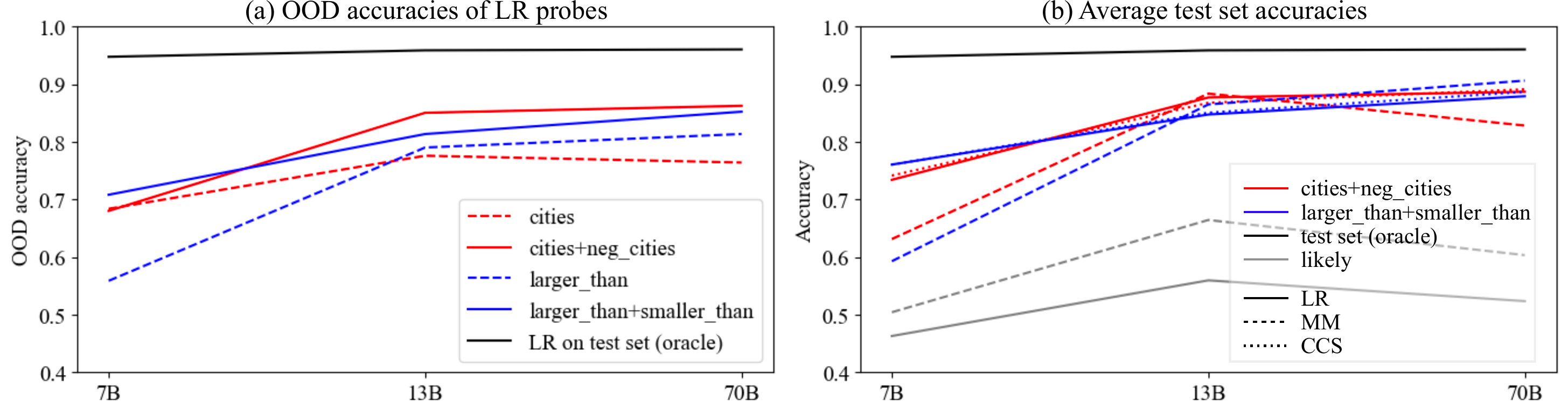}
    \caption{(a) Average accuracies over all datasets aside from those used for training. (b) Accuracies of probes for varying model scales and training data, averaged over all test sets.}
    \label{fig:scaling_plots}
\end{figure}

For each training set, probing technique, and model scale, we report the average accuracy across test sets. We expect many readers to be interested in the full results (including test set-specific accuracies), which are reported in App.~\ref{app:generalization-full}. Calibrated few-shot prompting was a surprisingly weak baseline, so we do not report it here (but see App.~\ref{app:generalization-full}).

\textbf{Training on statements and their opposites improves generalization} (Fig.~\ref{fig:scaling_plots}(a)). When passing from \texttt{cities} to \texttt{cities+neg\_cities}, this effect is largely explained by improved generalization on \texttt{neg\_sp\_en\_trans}, i.e. using training data containing the word ``not'' improves generalization on other negated statements. On the other hand, passing from \texttt{larger\_than} to \texttt{larger\_than+smaller\_than} also improves performance, despite both datasets being very structurally different from the rest of our datasets. As discussed in \S\ref{ssec:visualization-discussion}, this suggest that training on statements and their opposites mitigates the effect certain types of non-truth features have on the probe direction.

\textbf{Probes generalize better for larger models} (Fig.~\ref{fig:scaling_plots}). While it is unsurprising that larger models are themselves better at labeling statements as true or false, it is not obvious that linear probes trained on larger models should also generalize better. Nevertheless, for LLaMA-2-13B and 70B, generalization is generally high; for example, no matter which probing technique is used, we find that probes trained on \texttt{larger\_than + smaller\_than} get $>95\%$ accuracy on \texttt{sp\_en\_trans}. This corroborates our discussion in \S\ref{ssec:visualization-discussion}, in which we suggested that larger models linearly represent more general concepts concepts, like truth, which capture shared aspects of diverse inputs.

\textbf{Mass-mean probes generalize about as well as other probing techniques for larger models} (Fig.~\ref{fig:scaling_plots}(b)). While MM underperforms LR and CCS for LLaMA-2-7B, we find for larger models performance comparable to that of other probing techniques. Further, we will see in \S\ref{sec:interventions} that the directions identified by MM are more causally implicated in model outputs.

\textbf{Probes trained on \texttt{likely} perform poorly} (Fig.~\ref{fig:scaling_plots}(b)). The full results reveal that probes trained on likely \textit{are} accurate when evaluated on some datasets, such as \texttt{sp\_en\_trans} where there is a strong ($r = .95$) correlation between text probability and truth. However, on other datasets, especially those with anti-correlations between probability and truth, these probes perform worse than chance. Overall, this indicates that LLMs linearly represent truth-relevant information beyond the plausibility of text.

\section{Causal intervention experiments}\label{sec:interventions}
In \S\ref{sec:probing} we measured the quality of linear probes in terms of their \textit{classification accuracy}, both in- and out-of-distribution. In this section, we perform experiments which measure the extent to which these probes identify directions which are \textit{causally implicated} in model outputs \cite{finlayson-etal-2021-causal,geva2023dissecting,DBLP:conf/nips/GeigerLIP21}. To do this, we will intervene in our model's computation by shifting the activations in group (b) (identified in \S\ref{sec:patching}) along the directions identified by our linear probes. Our goal is to cause LLMs to treat false statements appearing in context as true and vice versa. Crucially---and in contrast to prior work \citep{li2023inferencetime}---we evaluate our interventions on OOD inputs.

\begin{table*}[!h]
    \caption{NIEs for intervention experiments, averaged over statements from \texttt{sp\_en\_trans}.}
    \label{tab:results}
    \begin{center}
    \begin{small}
    \begin{tabular}{r r||c|c||c|c}
            & & \multicolumn{2}{c}{LLaMA-2-13B} & \multicolumn{2}{c}{LLaMA-2-70B} \\
        train set & probe & false$\rightarrow$true & true$\rightarrow$false & false$\rightarrow$true & true$\rightarrow$false \\  \hline
        \multirow{2}{*}{\texttt{cities}}
        & LR & .13 & .19 & .55 & .99 \\
        & MM & .77 & .90 & .58 & .89 \\ \hline
        \multirow{3}{*}{\shortstack[r]{\texttt{cities+}\\\texttt{neg\_cities}}}     
        & LR  & .33 & .52 & .61 & \textbf{1.00}\\
        & MM  & \textbf{.85} & \textbf{.97} & \textbf{.81} & .95 \\
        & CCS & .31 & .73 & .55 & .96\\ \hline \hline
        \multirow{2}{*}{\texttt{larger\_than}}
        & LR & .28 & .27 & .61 & .96 \\
        & MM & \textbf{.71} & \textbf{.79} & \textbf{.67} & 1.01 \\ \hline
        \multirow{3}{*}{\shortstack[r]{\texttt{larger\_than+}\\\texttt{smaller\_than}}}
        & LR  & .07 & .13 & .54 & 1.02\\
        & MM  & .26 & .53 & .66 & \textbf{1.03}\\
        & CCS & .08 & .17 & .57 & 1.02\\ \hline \hline
        \multirow{2}{*}{\texttt{likely}}
        & LR & .05 & .08 & .18 & .46\\
        & MM & .70 & .54 & .68 & .27
    \end{tabular}
    \end{small}
    \end{center}
\end{table*} 

\subsection{Experimental set-up}
Let $p$ be a linear probe trained on a true/false dataset $\mathcal{D}$. Let $\vtheta$ be the probe direction, normalized so that $p(\mu^- + \vtheta) = p(\mu^+)$ where $\mu^+$ and $\mu^-$ are the mean representations of the true and false statements in $\mathcal{D}$, respectively; in other words, we normalize $\vtheta$ so that from the perspective of the probe $p$, adding $\vtheta$ turns the average false statement into the average true statement. If our model encodes the truth value of statements along the direction $\vtheta$, we would expect that replacing the representation $\vx$ of a false statement $s$ with $\vx + \vtheta$ would cause the model to produce outputs consistent with $s$ being a true statement.

We use inputs of the form
\begingroup
\advance\leftmargini -1.75em
\begin{quote}
\begin{small}
    The Spanish word `fruta' means `goat'. This statement is: FALSE \\
    The Spanish word `carne' means `meat'. This statement is: TRUE \\
    \texttt{s}. This statement is:
\end{small}
\end{quote}
\endgroup
where \texttt{s} varies over \texttt{sp\_en\_trans} statements. Then for each of the probes of \S\ref{sec:probing} we record:
\begin{itemize}
    \item $PD^+$ and $PD^-$, the average probability differences $P(\texttt{TRUE}) - P(\texttt{FALSE})$ for $s$ varying over true statements or false statements in \texttt{sp\_en\_trans}, respectively,
    \item $PD^+_*$ and $PD^-_*$, the average probability differences where $s$ varies over true (resp. false) statements but the probe direction $\vtheta$ is subtracted (resp. added) to each group (b) hidden state.
\end{itemize}
Finally, we report the \textit{normalized indirect effects} (NIEs)
\[\frac{PD_*^- - PD^-}{PD^+ - PD^-}\quad\text{or}\quad \frac{PD_*^+ - PD^+}{PD^- - PD^+}\]
for the false$\rightarrow$true and the true$\rightarrow$false experiments, respectively. An NIE of $0$ means that the intervention was wholly ineffective at changing model outputs; an NIE of $1$ indicates that the intervention caused the LLM to label false statements as \texttt{TRUE} with as much confidence as genuine true statements, or vice versa.

\subsection{Results}

Results are shown in table~\ref{tab:results}. We summarize our main takeaways.

\textbf{Mass-mean probe directions are highly causal}, with MM outperforming LR and CCS in 7/8 experimental conditions, often substantially. This is true despite LR, MM, and CCS probes all have very similar \texttt{sp\_en\_trans} classification accuracies.

\textbf{Training on datasets and their opposites helps for \texttt{cities} but not for \texttt{larger\_than}}. This is surprising, considering that probes trained on \texttt{larger\_than + smaller\_than} are \textit{more} accurate on \texttt{sp\_en\_trans} than probes trained on \texttt{larger\_than} alone (see App.~\ref{app:generalization-full}), and indicates that there is more to be understood about how training on datasets and their opposites affects truth probes.

\textbf{Training on \texttt{likely} is a surprisingly good baseline, though still weaker than interventions using truth probes.} The performance here may be due to the strong correlation ($r = .95$) between inputs being true and probable (according to LLaMA-2-70B) on \texttt{sp\_en\_trans}.

\section{Discussion}
\subsection{Limitations and future work}
Our work has a number of limitations. First, we focus on simple, uncontroversial statements, and therefore cannot disambiguate truth from closely related features, such as ``commonly believed'' or ``verifiable'' \citep{levinstein2023lie}. Second, we study only models in the LLaMA-2 family, so it is possible that some of our results do not apply for all LLMs.

This work also raises several questions which we were unable to answer here. For instance, why were interventions with mass-mean probe directions extracted from the \texttt{likely} dataset so effective, despite these probes not themselves being accurate at classifying true/false statements? And why did mass-mean probing with the \texttt{cities + neg\_cities} training data perform poorly poorly for the 70B model, despite mass-mean probing with \texttt{larger\_than + smaller\_than} performing well?

\subsection{Conclusion}
In this work we conduct a detailed investigation of the structure of LLM representations of truth. Drawing on simple visualizations, probing experiments, and causal evidence, we find evidence that at scale, LLMs compute and linearly represent the truth of true/false statements. We also localize truth representations to certain hidden states and introduce mass-mean probing, a simple alternative to other linear probing techniques which better identifies truth directions from true/false datasets.

\section*{Acknowledgements}
We thank Ziming Liu and Isaac Liao for useful suggestions regarding distinguishing true text from likely text, and Wes Gurnee, Eric Michaud, and Peter Park for many helpful discussions throughout this project. We thank David Bau for useful suggestions regarding the experiments in Sec.~\ref{sec:interventions}. Thanks also to Nora Belrose for discussion about the connection between difference-in-mean probing and linear erasure.

We also thank Oam Patel, Hadas Orgad, Sohee Yang, and Karina Nguyen for their suggestions, as well as Helena Casademunt, Max Nadeau, and Ben Edelman for giving feedback during this paper’s
preparation. Plots were made with Plotly \citep{plotly}. Thanks to Kevin Ro Wang for catching a typo in the statement of Thm.~\ref{thm:lr-mm}.

\pagebreak

\bibliography{colm2024_conference}
\bibliographystyle{colm2024_conference}

\newpage
\appendix
\onecolumn
\section{Scoping of truth}\label{app:truth-defn}
In this work, we consider declarative factual statements, for example ``Eighty-one is larger than fifty-four'' or ``The city of Denver is in Vietnam.'' We scope ``truth'' to mean factuality, i.e. the truth or falsehood of these statements; for instance the examples given have truth values of true and false, respectively. To be clear, we list here some notions of ``truth'' which we do not consider in this work:
\begin{itemize}
    \item Correct question answering (considered in \citet{li2023inferencetime} and for some of the prompts used in \citet{burns2023discovering}). For example, we do not consider ``What country is Paris in? France'' to have a truth value.
    \item Presence of deception, for example dishonest expressions of opinion (``I like that plan'').
    \item Compliance. For example, ``Answer this question incorrectly: what country is Paris in? Paris is in Egypt'' is an example of compliance, even though the statement at the end of the text is false.
\end{itemize}
Moreover, the statements under consideration in this work are all simple, unambiguous, and uncontroversial. Thus, we make no attempt to disambiguate ``true statements'' from closely-related notions like:
\begin{itemize}
    \item Uncontroversial statements
    \item Statements which are widely believed
    \item Statements which educated people believe.
\end{itemize}
On the other hand, our statements \textit{do} disambiguate the notions of ``true statements'' and ``statements which are likely to appear in training data''; See our discussion at the end of \S\ref{sec:datasets}.

\section{Full patching results}
\label{app:patching}

Fig.~\ref{fig:patching-full} shows full patching results. We see that both LLaMA-2-7B and LLaMA-2-13B display the ``summarization'' behavior in which information relevant to the full statement is represented over the end-of-sentence punctuation token. On the other hand, LLaMA-2-70B displays this behavior in a context-dependent way -- we see it for \texttt{cities} but not for \texttt{sp\_en\_trans}.

\begin{figure}
    \centering
    \includegraphics[width=\linewidth]{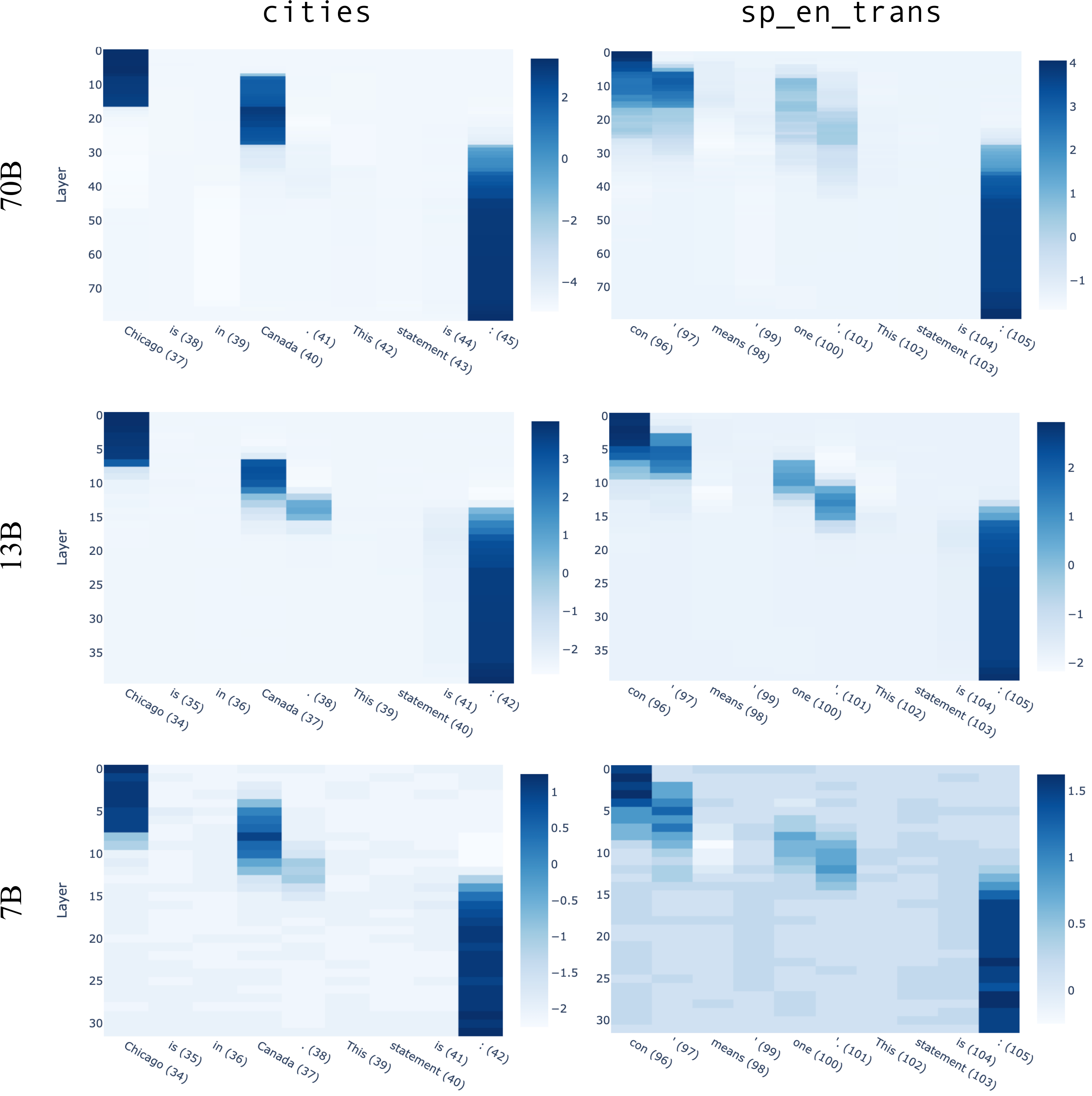}
    \caption{Full patching results across all three model sizes and inputs. Results are for patching false inputs (shown) to true by changing the first token shown on the left. Numbers in parentheses are the index of the token in the full (few-shot) prompt.}
    \label{fig:patching-full}
\end{figure}

\section{Emergence of linear structure across layers}\label{app:emergence}

\begin{figure}
    \centering
    \includegraphics[width=\linewidth]{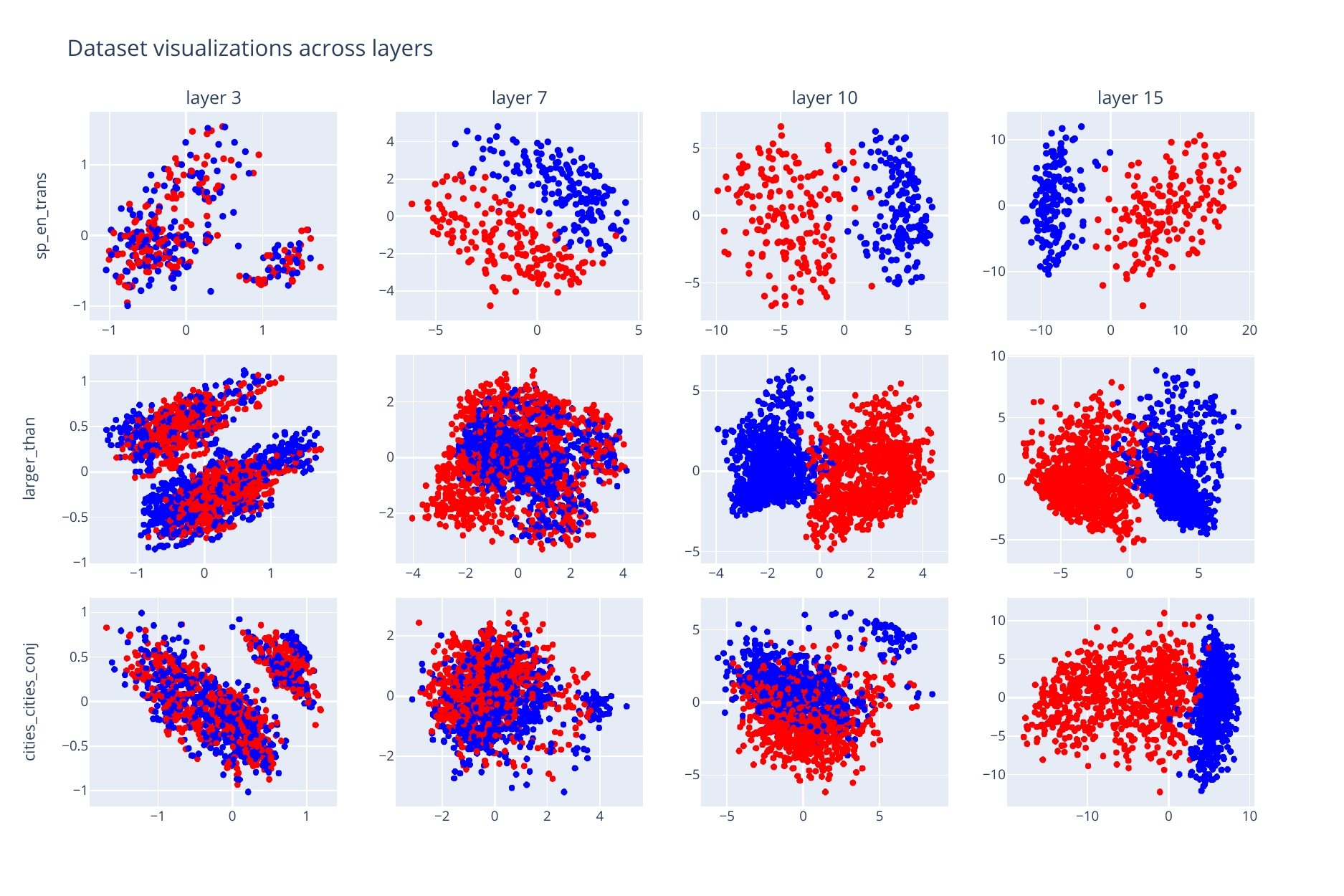}
    \caption{Projections of LLaMA-2-13B representations of datasets onto their top two PCs, across various layers.}
    \label{fig:layer-sweep}
\end{figure}

The linear structure observed in \S\ref{sec:visualizations} follows the following pattern: in early layers, representations are uninformative; then, in early middle layers, salient linear structure in the top few PCs rapidly emerges, with this structure emerging later for statements with a more complicated logical structure (e.g. conjunctions). This is shown for LLaMA-2-13B in Fig.~\ref{fig:layer-sweep}. We hypothesize that this is due to LLMs hierarchically developing understanding of their input data, progressing from surface level features to more abstract concepts.

The misalignment in Fig.~\ref{fig:comparative-visualizations}(c) also has an interesting dependence on layer. In Fig.~\ref{fig:ortho-emergent} we visualize LLaMA-2-13B representations of \texttt{cities} and \texttt{neg\_cities} at various layers. In early layers (left) we see \textit{antipodal} alignment as in Fig.~\ref{fig:comparative-visualizations}(b, center). As we progress through layers, we see the axes of separation rotate to lie orthogonally, until they eventually align.

One interpretation of this is that in early layers, the model computed and linearly represented some feature (like ``close association'') which correlates with truth on both \texttt{cities} and \texttt{neg\_cities} but with opposite signs. In later layers, the model computed and promoted to greater salience a more abstract concept which correlates with truth across both datasets.

\begin{figure}
    \centering
    \includegraphics[width=\linewidth]{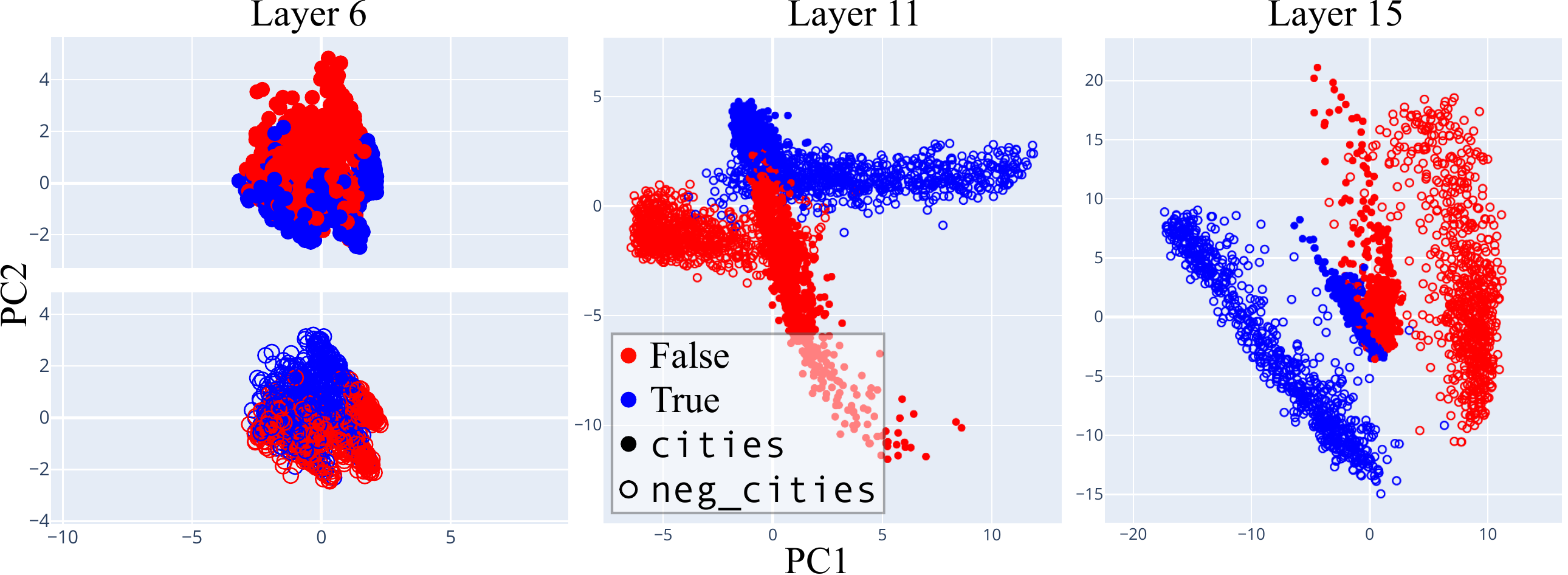}
    \caption{PCA visualizations of LLaMA-2-13B representations of \texttt{cities} and \texttt{neg\_cities} at various layers.}
    \label{fig:ortho-emergent}
\end{figure}

\section{Full generalization results}
\label{app:generalization-full}

Here we present the full generalization results for probes trained on LLaMA-2-70B (Fig.~\ref{fig:70b-generalization}), 13B (Fig.~\ref{fig:13b-generalization}), and 7B (Fig.~\ref{fig:7b-generalization}). The horizontal axis shows the training data for the probe and the vertical axis shows the test set.

\begin{figure}
    \centering
    \includegraphics[width=0.8\linewidth]{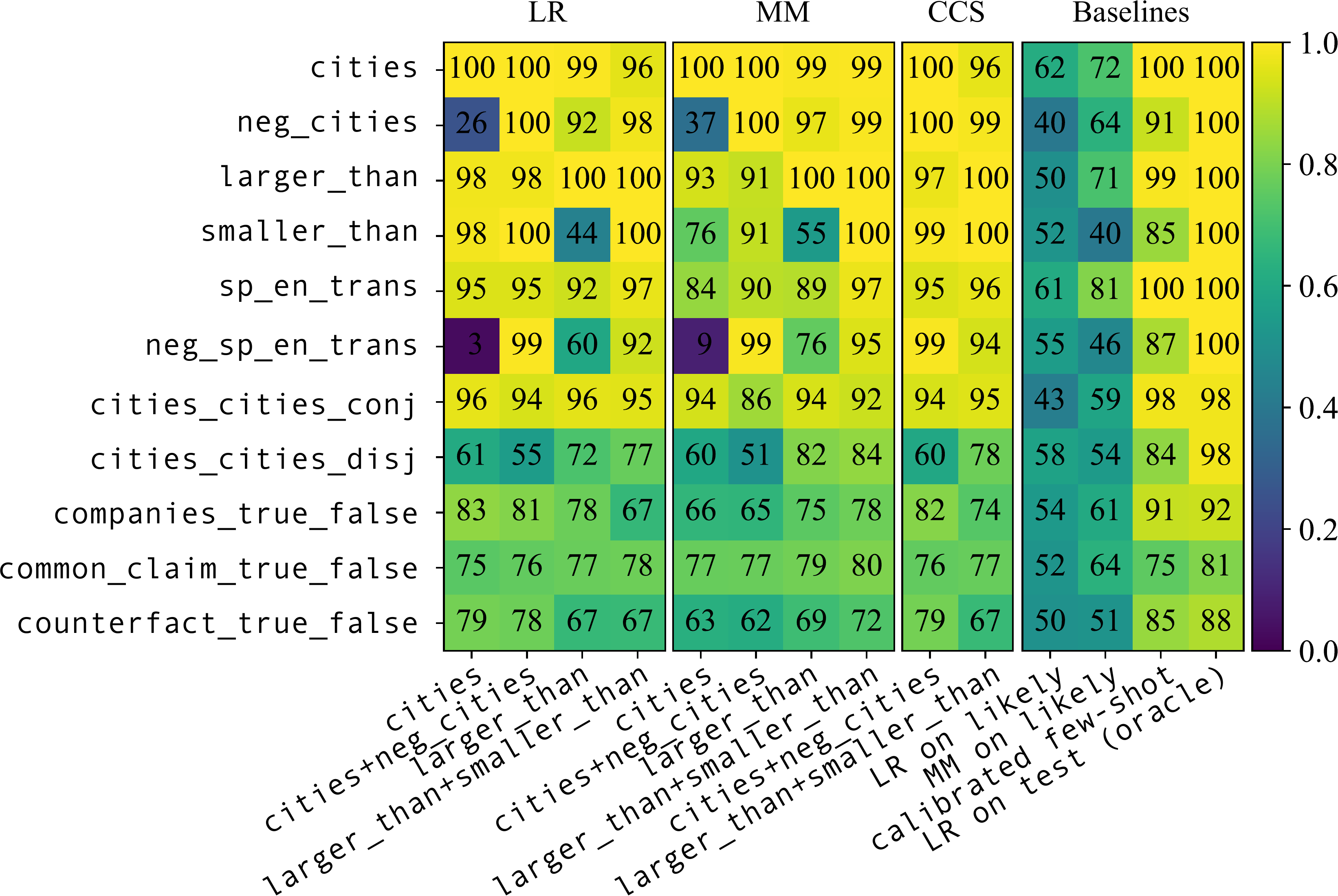}
    \caption{Generalization results for \textbf{LLaMA-2-70B}.}
    \label{fig:70b-generalization}
\end{figure}
\begin{figure}
    \centering
    \includegraphics[width=0.8\linewidth]{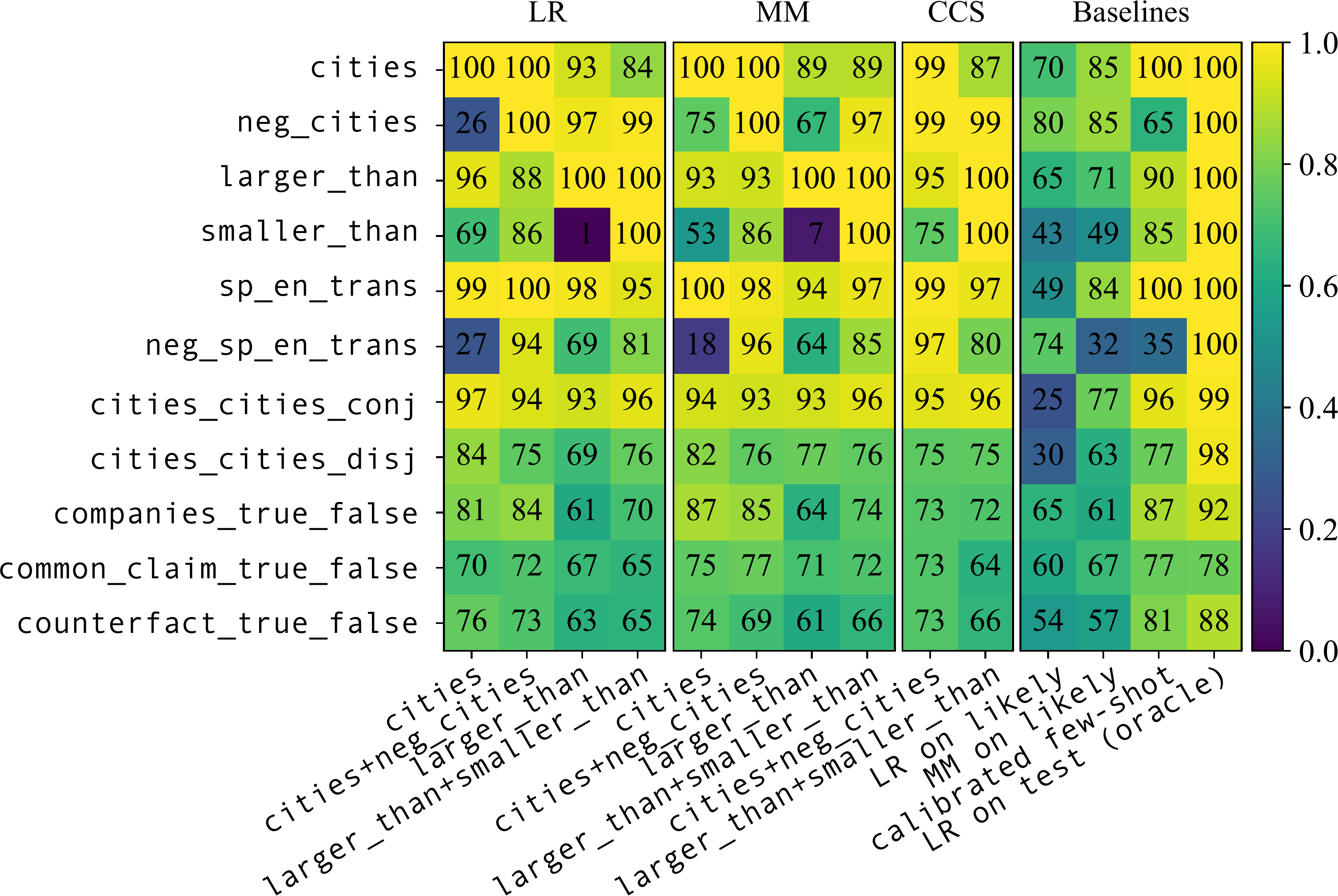}
    \caption{Generalization results for \textbf{LLaMA-2-13B}.}
    \label{fig:13b-generalization}
\end{figure}
\begin{figure}
    \centering
    \includegraphics[width=0.8\linewidth]{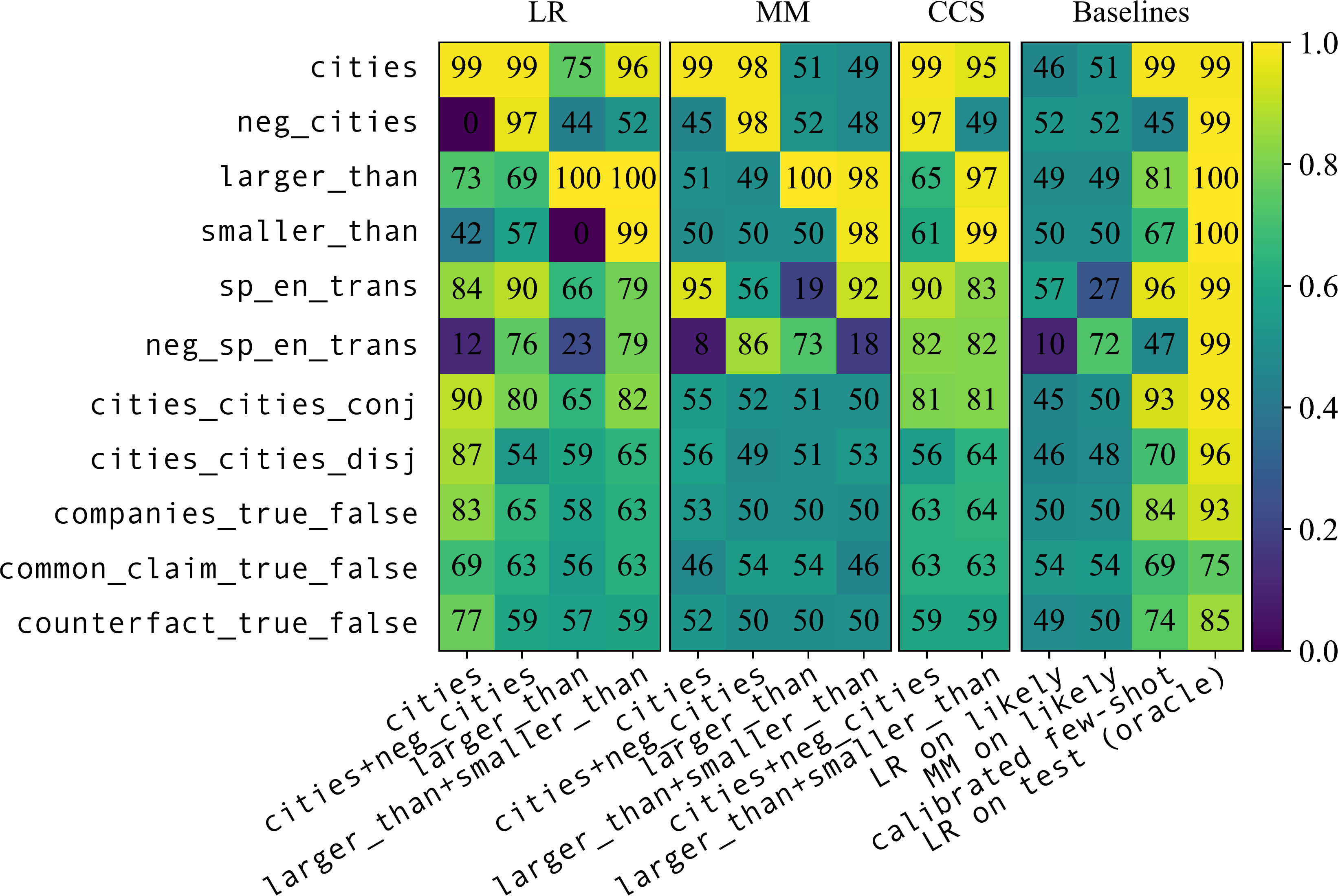}
    \caption{Generalization results for \textbf{LLaMA-2-7B}.}
    \label{fig:7b-generalization}
\end{figure}

\section{Mass-mean probing in terms of Mahalanobis whitening}\label{app:whitening}

\begin{figure}
    \centering
    \includegraphics[width=\linewidth]{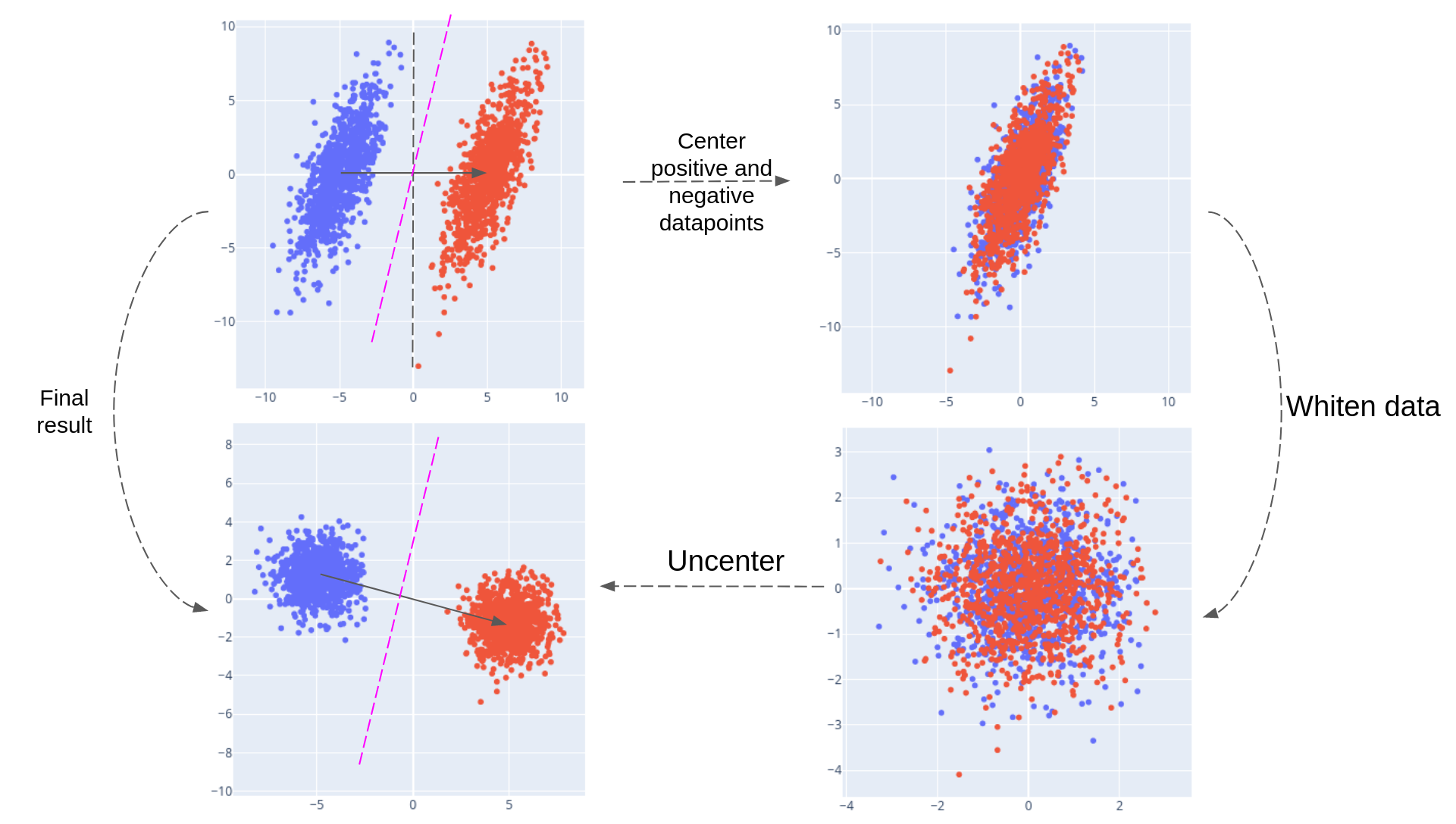}
    \caption{Mass-mean probing is equivalent to taking the projection onto $\vtheta_{\mathrm{mm}}$ after applying a whitening transformation.}
    \label{fig:whitening}
\end{figure}

One way to interpret the formula $p_{\mathrm{mm}}^{\mathrm{iid}}(\vx) = \sigma(\vtheta_{\mathrm{mm}}^T\Sigma^{-1}\vx)$ for the IID version of mass-mean probing is in terms of Mahalanobis whitening. Recall that if $\mathcal{D} = \{x_i\}$ is a dataset of $x_i\in \mathbb{R}^d$ with covariance matrix $\Sigma$, then the Mahalanobis whitening transformation $W = \Sigma^{-1/2}$ satisfies the property that $\mathcal{D}' = \{Wx_i\}$ has covariance matrix given by the identity matrix, i.e. the whitened coordinates are uncorrelated with variance $1$. Thus, noting that $\vtheta_{\mathrm{mm}}^T\Sigma^{-1}\vx$ coincides with the inner product between $W\vx$ and $W\vtheta$, we see that $p_{\mathrm{mm}}$ amounts to taking the projection onto $\vtheta_{\mathrm{mm}}$ after performing the change-of-basis given by $W$. This is illustrated with hypothetical data in Fig.~\ref{fig:whitening}.

\section{For Gaussian data, IID mass-mean probing coincides with logistic regression on average}\label{app:lrmm}
Let $\vtheta\in \mathbb{R}^d$ and $\Sigma$ be a symmetric, positive-definite $d\times d$ matrix. Suppose given access to a distribution $\mathcal{D}$ of datapoints $\vx\in \mathrm{R}^d$ with binary labels $y\in \{0,1\}$ such that the negative datapoints are distributed as $\mathcal{N}(-\theta, \Sigma)$ and the positive datapoints are distributed as $\mathcal{N}(\theta, \Sigma)$. Then the vector identified by mass-mean probing is $\vtheta_{\mathrm{mm}} = 2\vtheta$. The following theorem then shows that $p_{\mathrm{mm}}^{\mathrm{iid}}(\vx) = \sigma(2\theta^T\Sigma^{-1}\vx)$ is also the solution to logistic regression up to scaling.

\begin{theorem}\label{thm:lr-mm}
    Let 
    \[\vtheta_{\mathrm{lr}} = \argmin_{\phi : \|\phi\| = 1} \mathbb{E}_{(\vx,y)\sim \mathcal{D}}\left[y\log \sigma\left(\phi^T\vx\right) + (1 - y)\log\left(1 - \sigma\left(\phi^T\vx\right)\right)\right]\]
    be the direction identified by logistic regression. Then $\vtheta_{\mathrm{lr}} \propto \Sigma^{-1}\vtheta$.
\end{theorem}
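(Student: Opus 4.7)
The plan is to exploit the fact that for this Gaussian-mixture model with shared covariance $\Sigma$ and balanced priors, the true posterior $P(y\mid\vx)$ is itself a logistic function of a linear combination of $\vx$. This makes the population log-likelihood a correctly-specified cross-entropy, whose maximizer is the ``true'' parameter and automatically lies along $\Sigma^{-1}\vtheta$. I will treat the displayed $\|\phi\|=1$ as a normalization that pins down the unit-norm representative of the LR direction, and read the $\argmin$ of the log-likelihood as the $\argmax$ of the concave objective (equivalently, the $\argmin$ of the cross-entropy loss).

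First I would compute $P(y=1\mid\vx)$ by direct Bayes' rule. With equal priors and class-conditionals $\mathcal{N}(\pm\vtheta,\Sigma)$, the log-odds reduce to
\[\log\frac{P(y=1\mid\vx)}{P(y=0\mid\vx)}=-\tfrac12(\vx-\vtheta)^T\Sigma^{-1}(\vx-\vtheta)+\tfrac12(\vx+\vtheta)^T\Sigma^{-1}(\vx+\vtheta)=2\vtheta^T\Sigma^{-1}\vx,\]
the quadratic-in-$\vx$ pieces cancelling because the two components share $\Sigma$. Hence $P(y=1\mid\vx)=\sigma(2\vtheta^T\Sigma^{-1}\vx)$, i.e.\ the true conditional is itself in the LR family at $\phi^\star=2\Sigma^{-1}\vtheta$. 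I would then note that the population objective equals, up to a $\phi$-independent entropy term, the negative expected KL divergence between $P(y\mid\vx)$ and $\sigma(\phi^T\vx)$, and that this is strictly concave in $\phi$ whenever the data covariance is non-degenerate. Because the true posterior is realized inside the model at $\phi^\star$, the objective has a unique global maximizer over $\R^d$ at $\phi^\star=2\Sigma^{-1}\vtheta$; its unit-norm representative $\phi^\star/\|\phi^\star\|$ is still proportional to $\Sigma^{-1}\vtheta$, which is the claim.

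As a self-contained cross-check I would bypass the posterior computation and set the population gradient to zero directly: $\nabla_\phi L(\phi)=\E[(y-\sigma(\phi^T\vx))\vx]$. Splitting the expectation by $y$ and substituting $\vx\mapsto-\vx$ in the $y=0$ term collapses everything into a single expectation under $\vx\sim\mathcal{N}(\vtheta,\Sigma)$; writing $\vx=\vtheta+\vu$ with $\vu\sim\mathcal{N}(\vzero,\Sigma)$ and applying Stein's lemma to $\sigma(\phi^T\vtheta+\phi^T\vu)$ yields
\[\nabla L(\phi)=(1-\bar p)\,\vtheta-\bar v\,\Sigma\phi,\qquad \bar p=\E_{\vx\sim\mathcal{N}(\vtheta,\Sigma)}[\sigma(\phi^T\vx)],\ \ \bar v=\E_{\vx\sim\mathcal{N}(\vtheta,\Sigma)}[\sigma(\phi^T\vx)(1-\sigma(\phi^T\vx))].\]
The stationarity equation $\nabla L=\vzero$ immediately gives $\phi=\bigl((1-\bar p)/\bar v\bigr)\,\Sigma^{-1}\vtheta$, forcing every critical point to lie along $\Sigma^{-1}\vtheta$; strict concavity then pins down a unique global maximizer in that direction.

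The main obstacle is precisely this implicit self-reference: $\bar p$ and $\bar v$ depend on $\phi$, so the stationarity equation does not \emph{solve} for $\phi$ in closed form. The key observation that makes the argument go through is that this self-dependence only contributes a scalar prefactor — the direction of every stationary point is pre-determined by the only vector structures available, $\vtheta$ and $\Sigma\phi$, and balancing them forces alignment with $\Sigma^{-1}\vtheta$. A secondary subtlety is justifying that the constrained optimization in the theorem statement and the unconstrained maximization delivering $\phi^\star$ specify the same direction; this follows because $\vtheta_{\mathrm{lr}}$ is claimed only up to proportionality, and the concave objective has a unique optimal direction to which any sensible scale- or norm-normalized representative must agree.
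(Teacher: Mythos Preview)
Your proposal is correct and takes a genuinely different route from the paper. The paper's proof is a two-line change-of-coordinates argument: apply the whitening map $W=\Sigma^{-1/2}$ so that the class-conditionals become $\mathcal{N}(\pm W\vtheta, I_d)$, observe that in the isotropic problem the logistic-regression direction is ``clearly'' $\propto W\vtheta$ by symmetry, and then undo the transformation to get $\vtheta_{\mathrm{lr}}\propto \Sigma^{-1}\vtheta$. You instead compute the Bayes posterior explicitly, showing $P(y=1\mid\vx)=\sigma(2\vtheta^T\Sigma^{-1}\vx)$ and then invoking that the population cross-entropy is minimized when the model equals the true conditional; this is essentially the classical ``LDA $=$ LR under homoscedastic Gaussians'' result, made fully explicit. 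Your Stein's-lemma gradient computation is an additional, self-contained verification that every stationary point lies along $\Sigma^{-1}\vtheta$ without first knowing the posterior form. The paper's argument is shorter and more geometric; yours is more informative because it exposes \emph{why} the direction is $\Sigma^{-1}\vtheta$ (the shared-covariance cancellation in the log-odds), connects to standard discriminant-analysis theory, and---unlike the paper---explicitly flags the subtlety about whether the unit-norm constrained optimum coincides in direction with the unconstrained one. Both proofs in fact elide this last point at the same level of rigor, so your treatment is at least as careful as the original.
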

\begin{proof}
    Since the change of coordinates $\vx\mapsto W\vx$ where $W = \Sigma^{-1/2}$ (see App.~\ref{app:whitening}) sends $\mathcal{N}(\pm \theta, \Sigma)$ to $\mathcal{N}(\pm W\theta, I_d)$, we see that
    \[W\Sigma\vtheta_{\mathrm{lr}} = \argmin_{\phi : \|\phi\| = 1} \mathbb{E}_{(\vx,y)\sim \mathcal{D}'}\left[y\log \sigma\left(\phi^TW\vx\right) + (1 - y)\log\left(1 - \sigma\left(\phi^TW\vx\right)\right)\right]\]
    where $\mathcal{D}'$ is the distribution of labeled $\vx\in \mathbb{R}^d$ such that the positive/negative datapoints are distributed as $\mathcal{N}(\pm W\theta, I_d)$. But the argmax on the right-hand side is clearly $\propto W\theta$, so that $\theta_{\mathrm{lr}}\propto \Sigma^{-1}\theta$ as desired.
\end{proof}

\section{Difference-in-means directions and linear concept erasure}
In this appendix, we explain the connection between difference-in-means directions and optimal erasure. One consequence of this connection is that it suggests a natural extension of difference-in-means probes to multi-class classification data.

The connection comes via the following theorem from \citet{belrose2023leace}.
\begin{theorem}
    \citep[Thm.~3.1.]{belrose2023leace} Let $(X,Y)$ be jointly distributed random vectors with $X\in \mathbb{R}^d$ having finite mean and $Y\in\mathcal{Y} = \{\mathbf{y}\in \{0,1\}^k : \|\mathbf{y}\|_1 = 1\}$ (representing one-hot encodings of a multi-class labels). Suppose that $\mathcal{L}:\mathbb{R}^k\times \mathcal{Y}\rightarrow \mathbb{R}^{>0}$ is a loss function convex in its first argument (e.g. cross-entropy loss).

    If the class-conditional means $\mathbb{E}[X|Y = i]$ for $i\in\{1,\dots,k\}$ are all equal, then the best \textit{affine} predictor (that is, a predictor $\eta:\mathbb{R}^d\rightarrow \mathbb{R}^k$ of the form $\eta(\vx) = W\vx + \vb$) is constant $\eta(\vx) = \vb$. 
\end{theorem}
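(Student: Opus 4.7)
The plan is to show that any affine predictor is dominated in risk by a constant predictor, so the optimization over affine functions reduces to one over constants. Let $\vmu = \mathbb{E}[X \mid Y = \ve_i]$, which by hypothesis does not depend on $i$ and therefore coincides with $\mathbb{E}[X]$. Given an arbitrary affine predictor $\eta(\vx) = W\vx + \vb$, I would set $\vb' = W\vmu + \vb$ and consider the constant predictor $\tilde\eta(\vx) \equiv \vb'$. The target inequality is $\mathbb{E}[\mathcal{L}(\tilde\eta(X), Y)] \le \mathbb{E}[\mathcal{L}(\eta(X), Y)]$; since $\tilde\eta$ is itself a valid affine predictor, this immediately implies that an optimum (if one exists) can be taken to be constant, after which one merely optimizes over the bias $\vb$.

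The one nontrivial step is a class-conditional application of Jensen's inequality. First, by the tower property,
\[
\mathbb{E}[\mathcal{L}(WX + \vb, Y)] = \sum_{i=1}^{k} P(Y = \ve_i)\, \mathbb{E}\!\left[\mathcal{L}(WX + \vb, \ve_i) \,\middle|\, Y = \ve_i\right].
\]
Next, for each fixed $i$, the function $\vz \mapsto \mathcal{L}(\vz, \ve_i)$ is convex, so Jensen's inequality applied to the conditional distribution of $X$ given $Y = \ve_i$ and the affine map $\vx \mapsto W\vx + \vb$ yields
\[
\mathbb{E}\!\left[\mathcal{L}(WX + \vb, \ve_i) \,\middle|\, Y = \ve_i\right] \;\ge\; \mathcal{L}\!\left(W\,\mathbb{E}[X \mid Y = \ve_i] + \vb,\, \ve_i\right) \;=\; \mathcal{L}(\vb', \ve_i),
\]
where the final equality is exactly where the equal-class-means hypothesis is used. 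Summing against $P(Y = \ve_i)$ and recognizing the right-hand side as $\mathbb{E}[\mathcal{L}(\vb', Y)]$ finishes the argument.

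The only real obstacle I anticipate is bookkeeping around integrability and the conditional form of Jensen. Specifically, one needs $W\,\mathbb{E}[X \mid Y = \ve_i]$ to be well-defined, which follows from the finite-mean hypothesis on $X$, and one needs the conditional expectation of $\mathcal{L}(WX + \vb, \ve_i)$ to be meaningful. Because $\mathcal{L}$ is assumed to take values in $\mathbb{R}^{>0}$, Jensen's inequality applies as an extended-real inequality and the conclusion remains valid even if both sides are $+\infty$, so no additional integrability assumption is required. Everything else is a direct computation; the content of the theorem really is just ``convex loss plus equal class-conditional means forces any linear feature extraction to be wasted.''
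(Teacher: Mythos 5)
Your proof is correct. A small note: this paper does not actually prove the theorem — it is cited verbatim from \citet{belrose2023leace} (their Thm.~3.1) — so there is no in-paper argument to compare against. Your class-conditional Jensen argument (tower property over classes, then Jensen on the convex map $\vz\mapsto\mathcal{L}(\vz,\ve_i)$ applied to the affine transform of $X$, followed by the equal-conditional-means hypothesis to collapse $W\,\mathbb{E}[X\mid Y=\ve_i]+\vb$ to a single constant $\vb'$) is exactly the standard proof of this linear-guardedness result, and your handling of the integrability caveat via positivity of $\mathcal{L}$ and the extended-real form of Jensen is the right way to make the argument airtight.
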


In the case of a binary classification problem $(X, Y)$, this theorem implies that any nullity $1$ projection $P$ which eliminates linearly-recoverable information from $X$ has kernel
\[\ker P = \mathrm{span}(\bm{\delta})\]
generated by the difference-in-mean vector $\bm{\delta} = \vmu^+ - \vmu^-$ for the classes.

For a more general multi-class classification problem, one could similarly ask: What is the ``best'' direction to project away in order to eliminate linearly-recoverable information from $X$? A natural choice is thus the top left singular vector of the cross-covariance matrix $\Sigma_{XY}$. (In the case of binary classification, we have that $\Sigma_{XY} = [-\bm{\delta}\;\;\bm{\delta}]$ has column rank $1$, making $\bm\delta$ the top left singular vector.)

\section{Details on dataset creation}\label{app:datasets}
Here we give example statements from our datasets, templates used for making the datasets, and other details regarding dataset creation.

\textbf{\texttt{cities}.} We formed these statements from the template ``The city of \texttt{[city]} is in \texttt{[country]}'' using a list of world cities from \citet{geonames}. We filtered for cities with populations $>500,000$, which did not share their name with any other listed city, which were located in a curated list of widely-recognized countries, and which were not city-states. For each city, we generated one true statement and one false statement, where the false statement was generated by sampling a false country with probability equal to the country's frequency among the true datapoints (this was to ensure that e.g. statements ending with ``China'' were not disproportionately true). Example statements:
\begin{itemize}
    \item The city of Sevastopol is in Ukraine. (TRUE)
    \item The city of Baghdad is in China. (FALSE)
\end{itemize}

\textbf{\texttt{sp\_en\_trans}.} Beginning with a list of common Spanish words and their English translations, we formed statements from the template ``The Spanish word `\texttt{[Spanish word]}' means `\texttt{[English word]}'.'' Half of Spanish words were given their correct labels and half were given random incorrect labels from English words in the dataset. The first author, a Spanish speaker, then went through the dataset by hand and deleted examples with Spanish words that have multiple viable translations or were otherwise ambiguous. Example statements:
\begin{itemize}
    \item The Spanish word `imaginar' means `to imagine'. (TRUE)
    \item The Spanish word `silla' means `neighbor'. (FALSE)
\end{itemize}

\textbf{\texttt{larger\_than} and \texttt{smaller\_than}.} We generate these statements from the templates ``\texttt{x} is larger than \texttt{y}'' and ``\texttt{x} is smaller than \texttt{y}'' for $\texttt{x},\texttt{y}\in \{\texttt{fifty-one}, \texttt{fifty-two}, \dots, \texttt{ninety-nine}\}$. We exclude cases where $\texttt{x} = \texttt{y}$ or where one of $\texttt{x}$ or $\texttt{y}$ is divisible by $10$. We chose to limit the range of possible values in this way for the sake of visualization: we found that LLaMA-13B linearly represents the size of numbers, but not at a consistent scale: the internally represented difference between \texttt{one} and \texttt{ten} is considerably larger than between \texttt{fifty} and \texttt{sixty}. Thus, when visualizing statements with numbers ranging to \texttt{one}, the top principal components are dominated by features representing the sizes of numbers. 

\textbf{\texttt{neg\_cities} and \texttt{neg\_sp\_en\_trans}.} We form these datasets by negating statements from \texttt{cities} and \texttt{sp\_en\_trans} according to the templates ``The city of \texttt{[city]} is not in \texttt{[country]}'' and ```The Spanish word `\texttt{[Spanish word]}' does not mean `\texttt{[English word]}'.''

\textbf{\texttt{cities\_cities\_conj} and \texttt{cities\_cities\_disj}.} These datasets are generated from \texttt{cities} according to the following templates:
\begin{itemize}
    \item It is the case both that \texttt{[statement 1]} and that \texttt{[statement 2]}.
    \item It is the case either that \texttt{[statement 1]} or that \texttt{[statement 2]}.
\end{itemize}
We sample the two statements independently to be true with probability $\frac{1}{\sqrt{2}}$ for \texttt{cities\_cities\_conj} and with probability $1 - \frac{1}{\sqrt{2}}$ for \texttt{cities\_cities\_disj}. These probabilities are selected to ensure that the overall dataset is balanced between true and false statements, but that there is no correlation between the truth of the first and second statement in the conjunction.

\textbf{\texttt{likely}.} We generate this dataset by having LLaMA-13B produce unconditioned generations of length up to $100$ tokens, using temperature $0.9$. At the final token of the generation, we either sample the most likely token or the 100th most likely final token. We remove generations which contain special tokens. Dataset examples:
\begin{itemize}
    \item The 2019-2024 Outlook for Women’s and Girls’ Cut and Sew and Knit and Crochet Sweaters in the United States This study covers the latent demand outlook for (LIKELY)
    \item Tags: python, django Question: How to get my django app to work with python 3.7 I am new to django and have been trying to install it in my pc. I have installed python 3.7 together (UNLIKELY)
\end{itemize}

\textbf{\texttt{companies\_true\_false}.} This dataset was introduced by \citet{azaria2023internal}; we obtained it via the project repository for \citet{levinstein2023lie} which also used the dataset. Example statements:
\begin{itemize}
    \item ArcelorMittal has headquarters in Luxembourg. (TRUE)
    \item Exxon Mobil engages in the provision of banking and financial services. (FALSE)
\end{itemize}

\textbf{\texttt{common\_claim\_true\_false}.} CommonClaim was introduced in \citet{casper2023explore}. It consists of various statements generated by GPT-3-davinci-002, labeled by humans as being true, false, or neither. If human labelers disagreed on the truth of a statement, this is also recorded. We adapted CommonClaim by selecting statements which were labeled true or false with no labeler disagreement, then removing excess true statement to balance the dataset. Example statements:
\begin{itemize}
    \item Tomatoes are not actually a vegetable. (TRUE)
    \item Contrary to popular belief, the platypuses are not venomous. (FALSE)
\end{itemize}
As these examples show, the statements can be ambiguous or of unclear truth value.

\textbf{\texttt{counterfact\_true\_false}.} Counterfact was introduced in \citet{meng2022locating} and consists of factual recall statements. We adapt Counterfact by using statements which form complete sentences and, for each such statement, using both the true version and a false version given by one of Counterfact's suggested false modifications. We also append a period to the end. Example statements:
\begin{itemize}
    \item Olaus Rudbeck spoke the language Swedish. (TRUE)
    \item The official religion of Malacca sultanate is Christianity. (FALSE)
\end{itemize}

\end{document}